%
%

\documentclass[11pt,a4paper]{article}
\usepackage[hyperref]{acl2018}
\usepackage{times}
\usepackage{latexsym}
\usepackage{amssymb}
\usepackage{bm}
\usepackage{amsmath}
\usepackage{amsfonts}
\usepackage{amsthm}
\usepackage{latexsym}
\usepackage{booktabs}
\usepackage{siunitx}
\usepackage{xcolor}
\usepackage{dsfont}
\usepackage{graphicx}
\usepackage{caption}
\usepackage{subcaption}

\usepackage{url}

\aclfinalcopy 


\hyphenation{Chu-Liu-Edmond}

\renewcommand{\thefootnote}{*}

\newcommand{\comment}[1]{}

\DeclareMathOperator*{\cov}{\text{cov}}
\DeclareMathOperator*{\corr}{\text{corr}}

\newtheorem{definition}{Definition}

\newtheorem*{claim}{Claim}
\newtheorem*{conjecture}{Conjecture}

\DeclareMathOperator{\diag}{diag}

\title{Probabilistic Embedding of Knowledge Graphs with\\ Box Lattice Measures}

\author{Luke Vilnis$^*$ \qquad Xiang Li$^*$ \qquad Shikhar Murty \qquad Andrew McCallum \\
  College of Information and Computer Sciences \\
  University of Massachusetts Amherst\\
  \texttt{\{luke,xiangl,smurty,mccallum\}@cs.umass.edu} }

\date{}

\begin{document}
\maketitle

\let\thefootnote\relax\footnote{\textsuperscript{*} Equal contribution.}

\begin{abstract}
Embedding methods which enforce a partial order or lattice structure over the concept space, such as Order Embeddings (OE) \cite{orderembedding}, are a natural way to model transitive relational data (e.g. entailment graphs). However, OE learns a deterministic knowledge base, limiting expressiveness of queries and the ability to use uncertainty  for both prediction and learning (e.g. learning from expectations). Probabilistic extensions of OE \cite{lai2017learning} have provided the ability to somewhat calibrate these \emph{denotational probabilities} while retaining the consistency and inductive bias of ordered models, but lack the ability to model the negative correlations found in real-world knowledge. In this work we show that a broad class of models that assign probability measures to OE can never capture negative correlation, which motivates our construction of a novel \emph{box lattice} and accompanying probability measure to capture anticorrelation and even disjoint concepts, while still providing the benefits of probabilistic modeling, such as the ability to perform rich joint and conditional queries over arbitrary sets of concepts, and both learning from and predicting calibrated uncertainty. We show improvements over previous approaches in modeling the Flickr and WordNet entailment graphs, and investigate the power of the model.
\end{abstract}

\section{Introduction}

Structured embeddings based on regions, densities, and orderings have gained popularity in recent years for their inductive bias towards the essential asymmetries inherent in problems such as image captioning \cite{orderembedding}, lexical and textual entailment \cite{Erk:2009:RWR:1596374.1596387,vilnis2014word,lai2017learning,athiwaratkun2018on}, and knowledge graph completion and reasoning \cite{He:2015:LRK:2806416.2806502,NIPS2017_7213,li2017improved}.

Models that easily encode asymmetry, and related properties such as transitivity (the two components of commonplace relations such as partially ordered sets and lattices), have great utility in these applications, leaving less to be learned from the data than arbitrary relational models. At their best, they resemble a hybrid between embedding models and structured prediction. As noted by \citet{orderembedding} and \citet{li2017improved}, while the models learn sets of embeddings, these parameters obey rich structural constraints. The entire set can be thought of as one, sometimes provably consistent, structured prediction, such as an ontology in the form of a single directed acyclic graph.

While the structured prediction analogy applies best to Order Embeddings (OE), which embeds consistent partial orders, other region- and density-based representations have been proposed for the express purpose of inducing a bias towards asymmetric relationships. For example, the Gaussian Embedding (GE) model \cite{vilnis2014word} aims to represent the asymmetry and uncertainty in an object's relations and attributes by means of uncertainty in the representation. However, while the space of representations is a manifold of probability distributions, the model is not truly probabilistic in that it does not model asymmetries and relations in terms of probabilities, but in terms of asymmetric comparison functions such as the originally proposed KL divergence and the recently proposed \emph{thresholded divergences} \cite{athiwaratkun2018on}.

Probabilistic models are especially compelling for modeling ontologies, entailment graphs, and knowledge graphs. Their desirable properties include an ability to remain consistent in the presence of noisy data, suitability towards semi-supervised training using the expectations and uncertain labels present in these large-scale applications, the naturality of representing the inherent uncertainty of knowledge they store, and the ability to answer complex queries involving more than 2 variables. Note that the final one requires a true joint probabilistic model with a tractable inference procedure, not something provided by e.g. matrix factorization.

We take the dual approach to density-based embeddings and model uncertainty about relationships and attributes as explicitly probabilistic, while basing the probability on a latent space of geometric objects that obey natural structural biases for modeling transitive, asymmetric relations. The most similar work are the probabilistic order embeddings (POE) of Lai \cite{lai2017learning}, which apply a probability measure to each order embedding's forward cone (the set of points greater than the embedding in each dimension), assigning a finite and normalized volume to the unbounded space. However, POE suffers severe limitations as a probabilistic model, including an inability to model negative correlations between concepts, which motivates the construction of our box lattice model.

Our model represents objects, concepts, and events as high-dimensional products-of-intervals (\emph{hyperrectangles} or \emph{boxes}), with an event's unary probability coming from the box volume and joint probabilities coming from overlaps. This contrasts with POE's approach of defining events as the forward cones of vectors, extending to infinity, integrated under a probability measure that assigns them finite volume.

One desirable property of a structured representation for ordered data, originally noted in \cite{orderembedding} is a ``slackness'' shared by OE, POE, and our model: when the model predicts an ``edge'' or lack thereof (i.e. $P(a|b)=0$ or $1$, or a zero constraint violation in the case of OE), being exposed to that fact again will not update the model. Moreover, there are large degrees of freedom in parameter space that exhibit this slackness, giving the model the ability to embed complex structure with 0 loss when compared to models based on symmetric inner products or distances between embeddings, e.g. bilinear GLMs \cite{collins2002generalization}, Trans-E \cite{bordes2013translating}, and other embedding models which must always be pushing and pulling parameters towards and away from each other.

Our experiments demonstrate the power of our approach to probabilistic ordering-biased relational modeling. First, we investigate an instructive 2-dimensional toy dataset that both demonstrates the way the model self organizes its box event space, and enables sensible answers to queries involving arbitrary numbers of variables, despite being trained on only pairwise data. We achieve a new state of the art in denotational probability modeling on the Flickr entailment dataset \cite{lai2017learning}, and a matching state-of-the-art on WordNet hypernymy \cite{orderembedding,miller1995wordnet} with the concurrent work on thresholded Gaussian embedding of \citet{athiwaratkun2018on}, achieving our best results by training on additional co-occurrence expectations aggregated from leaf types. 

We find that the strong empirical performance of probabilistic ordering models, and our box lattice model in particular, and their endowment of new forms of training and querying, make them a promising avenue for future research in representing structured knowledge.

\section{Related Work}

In addition to the related work in structured embeddings mentioned in the introduction, our focus on directed, transitive relational modeling and ontology induction shares much with the rich field of directed graphical models and causal modeling \cite{pearl1988probabilistic}, as well as learning the structure of those models \cite{heckerman1995learning}. Work in undirected structure learning such the Graphical Lasso \cite{friedman2008sparse} is also relevant due to our desire to learn from pairwise joint/conditional probabilities and moment matrices, which are closely related in the setting of discrete variables.

Especially relevant research in Bayesian networks are applications towards learning taxonomic structure of relational data \cite{bansal2014structured}, although this work is often restricted towards tree-shaped ontologies, which allow efficient inference by Chu-Liu-Edmonds' algorithm \cite{chuliu}, while we focus on arbitrary DAGs.

As our model is based on populating a latent ``event space'' into boxes (products of intervals), it is especially reminiscent of the Mondrian process \cite{roy2009mondrian}. However, the Mondrian process partitions the space as a high dimensional tree (a non-parametric kd-tree), while our model allows the arbitrary box placement required for DAG structure, and is much more tractable in high dimensions compared to the Mondrian's Bayesian non-parametric inference.

Embedding applications to relational learning constitute a huge field to which it is impossible to do justice, but one general difference between our approaches is that e.g. a matrix factorization model treats the embeddings as objects to score relation links with, as opposed to POE or our model in which embeddings represent subsets of probabilistic event space which are directly integrated. They are full probabilistic models of the joint set of variables, rather than embedding-based approximations of only low-order joint and conditional probabilities. That is, any set of our parameters can answer any arbitrary probabilistic question (possibly requiring intractable computation), rather than being fixed to modeling only certain subsets of the joint.

Embedding-based learning's large advantage over the combinatorial structure learning presented by classical PGM approaches is its applicability to large-scale probability distributions containing hundreds of thousands of events or more, as in both our WordNet and Flickr experiments.

\section{Background}

\subsection{Partial Orders and Lattices}

A non-strict \emph{partial ordered set} (\emph{poset}) is a set $P$ equipped with a binary relation $\preceq$ such that for all $a,b,c \in P$,
\begin{itemize}  
\item $a \preceq a$ (reflexivity)
\item $a \preceq b \preceq a$ implies $a=b$ (antisymmetry)
\item $a \preceq b \preceq c$ implies $a \preceq c$ (transitivity)
\end{itemize}
This is simply a generalization of a totally ordered set that allows some elements to be incomparable, and is a good model for the kind of acyclic directed graph data found in knowledge bases.

A \emph{lattice} is a poset where any subset has a a unique least upper and greatest lower bound, which will be true of all posets (lattices) considered in this paper. The least upper bound of two elements $a,b \in P$ is called the \emph{join}, denoted $a \vee b$, and the greatest lower bound is called the \emph{meet}, denoted $a \wedge b$. 

Additionally, in a \emph{bounded lattice} we have two extra elements, called \emph{top}, denoted $\top$ and \emph{bottom}, denoted $\bot$, which are respectively the least upper bound and greatest lower bound of the entire space. Using the extended real number line (adding points at infinity), all lattices considered in this paper are bounded lattices.

\subsection{Order Embeddings (OE)}

\citet{orderembedding} introduced a method for embedding partially ordered sets and a task, \emph{partial order completion}, an abstract term for things like hypernym or entailment prediction (learning transitive relations). The goal is to learn a mapping from the partially-ordered data domain to some other partially-ordered space that will enable generalization.

\begin{definition} \citet{orderembedding}\\A function $f: (X,\preceq_X) \to (Y,\preceq _Y)$ is an \emph{order-embedding} if for all $u,v \in X$
\begin{align*}
u\preceq_X v \iff f(u) \preceq_Y f(v)
\end{align*}
\end{definition}

They choose $Y$ to be a vector space, and the order $\preceq_Y$ to be based on the \emph{reverse product order} on $\mathds{R}^n_+$, which specifies 
\begin{align*}
x \preceq y \iff \forall i \in \{1..n\},~~ x_i \ge y_i
\end{align*}
so an embedding is below another in the hierarchy if all of the coordinates are larger, and 0 provides a top element.

Although \citet{orderembedding} do not explicitly discuss it, their model does not just capture partial orderings, but is a standard construction of a vector (Hilbert) lattice, in which the operations of meet and join can be defined as taking the pointwise maximum and minimum of two vectors, respectively \cite{rieszspace}. This observation is also used in \cite{li2017improved} to generate extra constraints for training order embeddings.

As noted in the original work, these single point embeddings can be thought of as regions, i.e. the cone extending out from the vector towards infinity. All concepts ``entailed'' by a given concept must lie in this cone.

This ordering is optimized from examples of ordered elements and negative samples via a max-margin loss.

\subsection{Probabilistic Order Embeddings (POE)}

\citet{lai2017learning} built on the ``region'' idea to derive a probabilistic formulation (which we will refer to as POE) to model entailment probabilities in a consistent, hierarchical way. 

Noting that all of OE's regions obviously have the same infinite area under the standard (Lebesgue) measure of $\mathds{R}^n_+$, they propose a probabilistic interpretation where the Bernoulli probability of each concept $a$ or joint set of concepts $\{a,b\}$ with corresponding vectors $\{x,y\}$ is given by its volume under the exponential measure:
\begin{align*}
p(a)&=\exp(-\sum_i x_i)=\int\displaylimits_{z\preceq x} \exp(-\|z\|_1)dz\\
p(a,b)&=p(x\wedge y)=\exp(-\|\max(x_i,y_i)\|_1)
\end{align*}
since the meet of two vectors is simply the intersection of their area cones, and replacing sums with $\ell_1$ norms for brevity since all coordinates are positive. While having the intuition of measuring the areas of cones, this also automatically gives a valid probability distribution over concepts since this is just the product likelihood under a coordinatewise exponential distribution.

However, they note a deficiency of their model --- it can only model positive (Pearson) correlations between concepts (Bernoulli variables).

Consider two Bernoulli variables $a$ and $b$, whose probabilities correspond to the areas of cones $x$ and $y$. Recall the Bernoulli covariance formula (we will deal with covariances instead of correlations when convenient, since they always have the same sign):
\begin{align*}
&\cov(a,b)=p(a,b)-p(a)p(b)=\\
&~~\exp(-\|\max(x_i,y_i)\|_1)-\exp(-\|x_i+y_i\|_1)
\end{align*}
Since the sum of two positive vectors can only be greater than the sum of their pointwise maximum, this quantity will always be nonnegative. This has real consequences for probabilistic modeling in KBs: conditioning on more concepts will only make probabilities higher (or unchanged), e.g. $p(\text{dog}|\text{plant}) \ge p(\text{dog})$.

\subsection{Probabilistic Asymmetric Transitive Relations}

Probabilistic models have pleasing consistency properties for modeling asymmetric transitive relations, in particular compared to density-based embeddings --- a pairwise conditional probability table can almost always (in the technical sense) be asymmetrized to produce a DAG by simply taking an edge if $P(a|b)>P(b|a)$. A matrix of pairwise Gaussian KL divergences cannot be consistently asymmetrized in this manner. These claims are proven in Appendix \ref{asym-section}. While a high $P(a|b)$ does not always indicate an edge in an ontology due to confounding variables, existing graphical model structure learning methods can be used to further prune on the base graph without adding a cycle, such as Graphical Lasso or simple thresholding \cite{fattahi2017graphical}.

\section{Method}

We develop a probabilistic model for lattices based on hypercube embeddings that can model both positive and negative correlations. Before describing this, we first motivate our choice to abandon OE/POE type cone-based models for this purpose.

\subsection{Correlations from Cone Measures}
\begin{claim}
For a pair of Bernoulli variables $p(a)$ and $p(b)$, $\cov(a,b) \ge 0$ if the Bernoulli probabilities come from the volume of a cone as measured under any product (coordinate-wise) probability measure $p(x)=\prod_i^n p_i(x_i)$ on $\mathds{R}^n$, where $F_i$, the associated CDF for $p_i$, is monotone increasing.
\end{claim}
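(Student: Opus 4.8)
The plan is to reduce the $n$-dimensional statement to a coordinate-wise, one-dimensional inequality, exploiting the fact that both the product measure and the cone geometry factor across coordinates. First I would write each probability explicitly under the product measure. Writing $x,y \in \mathds{R}^n$ for the vectors of $a,b$ and using the reverse product order, the cone of $a$ is the box $\prod_i [x_i,\infty)$, so its measure factors into one-dimensional survival probabilities. Denoting $\bar F_i(t) = 1 - F_i(t)$, this gives
\begin{align*}
p(a) = \prod_i \bar F_i(x_i), \qquad p(b) = \prod_i \bar F_i(y_i).
\end{align*}
The meet $x \wedge y$ in the reverse product order is the coordinate-wise maximum, and its cone is exactly the intersection of the two cones, so
\begin{align*}
p(a,b) = \prod_i \bar F_i(\max(x_i,y_i)).
\end{align*}

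Next I would invoke monotonicity. Since each $F_i$ is monotone increasing, the survival function $\bar F_i$ is monotone decreasing, hence $\bar F_i(\max(x_i,y_i)) = \min(\bar F_i(x_i), \bar F_i(y_i))$. Setting $u_i = \bar F_i(x_i)$ and $v_i = \bar F_i(y_i)$, which lie in $[0,1]$ because they are probabilities, the covariance becomes
\begin{align*}
\cov(a,b) = \prod_i \min(u_i,v_i) - \prod_i u_i \prod_i v_i.
\end{align*}

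The core of the argument is then the elementary bound $\min(u_i,v_i) \ge u_i v_i$, valid whenever $u_i,v_i \in [0,1]$ (if $u_i \le v_i$ then $\min = u_i \ge u_i v_i$ since $v_i \le 1$). Because every factor is nonnegative, multiplying these inequalities over all $i$ yields $\prod_i \min(u_i,v_i) \ge \prod_i u_i v_i = \prod_i u_i \prod_i v_i$, and hence $\cov(a,b) \ge 0$.

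I do not expect a serious obstacle: the whole result collapses once one observes that the product measure and the cone both factor, turning the meet into a coordinate-wise \emph{min} after applying the decreasing survival function. The only points requiring care are correctly identifying the integration region and the meet under the reverse product order (so that a $\max$, not a $\min$, appears inside $\bar F_i$), and confirming that the survival values are genuine probabilities in $[0,1]$ so the termwise bound applies. Everything else is the one-line inequality above, generalized from the specific exponential measure used by POE to an arbitrary monotone product measure.
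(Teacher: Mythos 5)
Your proof is correct and is essentially the paper's own argument: both factor the product measure coordinatewise so that $p(a)$, $p(b)$, and $p(a,b)$ become products of survival probabilities $1-F_i$, and both conclude via the termwise inequality $\min(s,t)\ge st$ for $s,t\in[0,1]$ (the paper phrases this as the intersection box $\prod_i[\max(u_i,v_i),1]$ containing all factors of the product box, after mapping cones to boxes in the unit hypercube via the CDFs). The only difference is presentational—the paper emphasizes the cone-to-box change of variables because it motivates the box lattice construction, while you work directly with survival functions—but the mathematical content is identical.
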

\begin{proof}
For any product measure we have
\begin{align*}
\int\displaylimits_{z\preceq x} p(z)dz =\prod_i^n \int\displaylimits_{x_i \le z_i} p_i(z_i)dz_i = \prod_i^n 1 - F_i(x_i)
\end{align*}
This is just the area of the unique box corresponding to $\prod_i^n [F_i(x_i), 1] \in [0,1]^n$, under the uniform measure. This box is unique as a monotone increasing univariate CDF is bijective with $(0,1)$ --- cones in $\mathds{R}^n$ can be invertibly mapped to boxes of equivalent measure inside the unit hypercube $[0,1]^n$. These boxes have only half their degrees of freedom, as they have the form $[F_i(x_i),1]$ per dimension, (intuitively, they have one end "stuck at infinity" since the cone integrates to infinity. 

So W.L.O.G. we can consider two transformed cones $x$ and $y$ corresponding to our Bernoulli variables $a$ and $b$, and letting $F_i(x_i)=u_i$ and $F_i(y_i)=v_i$, their intersection in the unit hypercube is $\prod_i^n [\max(u_i,v_i),1]$. 

Pairing terms in the right-hand product, we have
\begin{align*}
&p(a,b)-p(a)p(b)=\\
&\prod_i^n ( 1 - \max(u_i,v_i)) -
\prod_i^n (1 - u_i)(1-v_i) \ge 0
\end{align*}
since the right contains all the terms of the left and can only grow smaller. This argument is easily modified to the case of the nonnegative orthant, \emph{mutatis mutandis}.
\end{proof}

An open question for future work is what non-product measures this claim also applies to. Note that some non-product measures, such as multivariate Gaussian, can be transformed into product measures easily (\emph{whitening}) and the above proof would still apply. It seems probable that some measures, nonlinearly entangled across dimensions, could encode negative correlations in cone volumes. However, it is not generally tractable to integrate high-dimensional cones under arbitrary non-product measures.

\subsection{Box Lattices}
\label{sec:box-lattice}

The above proof gives us intuition about the possible form of a better representation. Cones can be mapped into boxes within the unit hypercube while preserving their measure, and the lack of negative correlation seems to come from the fact that they always have an overly-large intersection due to ``pinning'' the maximum in each dimension to 1. To remedy this, we propose to learn representations in the space of \emph{all} boxes (axis-aligned hyperrectangles), gaining back an extra degree of freedom. These representations can be learned with a suitable probability measure in $\mathds{R}^n$, the nonnegative orthant $\mathds{R}_+^n$, or directly in the unit hypercube with the uniform measure, which we elect.

We associate each concept with 2 vectors, the minimum and maximum value of the box at each dimension. Practically for numerical reasons these are stored as a minimum, a positive offset plus an $\epsilon$ term to prevent boxes from becoming too small and underflowing.

Let us define our box embeddings as a pair of vectors in $[0,1]^n$, $(x_m,x_M)$, representing the maximum and minimum at each coordinate.

Then we can define a partial ordering by inclusion of boxes, and a lattice structure as
\begin{align*}
x \wedge y &= \bot~~\text{if $x$ and $y$ disjoint, else}\\
x \wedge y &=\prod_i [\max(x_{m,i},y_{m,i}), \min(x_{M,i},y_{M,i})]\\
x \vee y &= \prod_i [\min(x_{m,i},y_{m,i}), \max(x_{M,i},y_{M,i})]
\end{align*}
where the meet is the intersecting box, or bottom (the empty set) where no intersection exists, and join is the smallest enclosing box. This lattice, considered on its own terms as a non-probabilistic object, is strictly more general than the order embedding lattice in any dimension, which is proven in Appendix \ref{lattice-properties}.

However, the finite sizes of all the lattice elements lead to a natural probabilistic interpretation under the uniform measure. Joint and marginal probabilities are given by the volume of the (intersection) box. For concept $a$ with associated box $(x_m,x_M)$, probability is simply $p(a)=\prod_i^n(x_{M,i}-x_{m,i})$ (under the uniform measure). $p(\bot)$ is of course zero since no probability mass is assigned to the empty set.

It remains to show that this representation can represent both positive and negative correlations.

\begin{claim}
For a pair of Bernoulli variables $p(a)$ and $p(b)$, $\corr(a,b)$ can take on any value in $[-1,1]$ if the probabilities come from the volume of associated boxes in $[0,1]^n$.
\end{claim}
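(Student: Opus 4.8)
The plan is to give an explicit construction in a single dimension and then observe that it embeds trivially into $[0,1]^n$. A box in $[0,1]^n$ whose projection onto $n-1$ of the coordinates is the full interval $[0,1]$ contributes a factor of $1$ to every volume, so it suffices to realize the full range of correlations using two boxes in $[0,1]^1$, i.e.\ sub-intervals of the unit interval, with the remaining coordinates set to $[0,1]$ for both concepts.

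First I would recall the relevant quantities. For Bernoulli variables with $p(a)=\alpha$ and $p(b)=\beta$ we have $\mathrm{Var}(a)=\alpha(1-\alpha)$, $\mathrm{Var}(b)=\beta(1-\beta)$, and $\cov(a,b)=p(a,b)-\alpha\beta$, where $p(a,b)$ is the length of the overlap of the two intervals. The key observation --- exactly the degree of freedom the pinned cones of the previous claim lacked --- is that two \emph{free} intervals of given lengths can be made either nested or disjoint, so as one is slid relative to the other the overlap $p(a,b)$ sweeps the entire Fr\'echet interval $[\max(0,\alpha+\beta-1),\,\min(\alpha,\beta)]$ permitted by the marginals.

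The cleanest instantiation fixes both marginals to $1/2$, so each concept is an interval of length $1/2$ and $\mathrm{Var}(a)=\mathrm{Var}(b)=1/4$. Taking boxes $[0,1/2]$ and $[t,\,t+1/2]$ for $t\in[0,1/2]$, the overlap is exactly $1/2-t$, giving $\cov(a,b)=(1/2-t)-1/4$ and hence
\begin{align*}
\corr(a,b)=\frac{(1/2-t)-1/4}{1/4}=1-4t.
\end{align*}
This is a continuous (indeed linear) function of $t$ equal to $+1$ at $t=0$ (identical intervals) and $-1$ at $t=1/2$ (the disjoint intervals $[0,1/2]$ and $[1/2,1]$). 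By the intermediate value theorem every value in $[-1,1]$ is attained, which proves the claim.

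I expect no serious obstacle here: the entire content is the observation that freeing the upper endpoint of each interval lets the two events become disjoint, driving the covariance negative, whereas cones forced the overlap to be too large. The only points requiring a quick check are that the two extreme configurations genuinely hit $\corr=\pm1$ and that the correlation varies continuously between them, both of which fall out of the one-line computation above; I would close with a remark that the same slackness holds for other fixed marginals via the Fr\'echet bounds, so the construction is not special to the symmetric case $\alpha=\beta=1/2$.
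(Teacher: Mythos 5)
Your proof is correct and follows essentially the same route as the paper's: both establish $+1$ via identical boxes, $-1$ via disjoint boxes whose volumes sum to $1$, and fill in the intermediate values by continuity of the overlap under translation. Your version is simply a concrete one-dimensional instantiation (with the explicit formula $\corr(a,b)=1-4t$) of the paper's more abstract continuity argument, which if anything makes the intermediate-value step more self-contained.
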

\begin{proof}
Boxes can clearly model disjointness (exactly $-1$ correlation if the total volume of the boxes equals 1). Two identical boxes give their concepts exactly correlation 1. The area of the meet is continuous with respect to translations of intersecting boxes, and all other terms in correlation stay constant, so by continuity of the correlation function our model can achieve all possible correlations for a pair of variables.
\end{proof}
This proof can be extended to boxes in $\mathds{R}^n$ with product measures by the previous reduction.

\textbf{Limitations:} Note that this model cannot perfectly describe all possible probability distributions or concepts as embedded objects. For example, the complement of a box is not a box. However, queries about complemented variables can be calculated by the Inclusion-Exclusion principle, made more efficient by the fact that all non-negated terms can be grouped and calculated exactly. We show some toy exact calculations with negated variables in Appendix \ref{negated-variables}. Also, note that in a knowledge graph often true complements are not required --- for example \emph{mortal} and \emph{immortal} are not actually complements, because the concept $\emph{color}$ is neither.

Additionally, requiring the total probability mass covered by boxes to equal 1, or exactly matching marginal box probabilities while modeling all correlations is a difficult box-packing-type problem and not generally possible. Modeling limitations aside, the union of boxes having mass $<1$ can be seen as an open-world assumption on our KB (not all points in space have corresponding concepts, yet).

\subsection{Learning}

While inference (calculation of pairwise joint, unary marginal, and pairwise conditional probabilities) is quite straightforward by taking intersections of boxes and computing volumes (and their ratios), learning does not appear easy at first glance. While the (sub)gradient of the joint probability is well defined when boxes intersect, it is non-differentiable otherwise. Instead we optimize a lower bound.

Clearly $p(a\vee b)\ge p(a \cup b)$, with equality only when $a=b$, so this can give us a lower bound:
\begin{align*}
p(a \wedge b) &= p(a) + p(b) - p(a \cup b)\\
&\ge p(a) + p(b) - p(a \vee b)
\end{align*}
Where probabilities are always given by the volume of the associated box. This lower bound always exists and is differentiable, even when the joint is not. It is guaranteed to be nonpositive except when $a$ and $b$ intersect, in which case the true joint likelihood should be used.

While a negative bound on a probability is odd, inspecting the bound we see that its gradient will push the enclosing box to be smaller, while increasing areas of the individual boxes, until they intersect, which is a sensible learning strategy.

Since we are working with small probabilities it is advisable to negate this term and maximize the negative logarithm:
\begin{align*}
-\log(p(a \vee b) - p(a) - p(b))
\end{align*}
This still has an unbounded gradient as the lower bound approaches 0, so it is also useful to add a constant within the logarithm function to avoid numerical problems.

Since the likelihood of the full data is usually intractable to compute as a conjunction of many negations, we optimize binary conditional and unary marginal terms separately by maximum likelihood.

In this work, we parametrize the boxes as $(\text{min},\Delta=\text{max}-\text{min})$, with Euclidean projections after gradient steps to keep our parameters in the unit hypercube and maintain the minimum/delta constraints.

Now that we have the ability to compute probabilities and (surrogate) gradients for arbitrary marginals in the model, and by extension conditionals, we will see specific examples in the experiments.

\section{Experiments}

\subsection{Warmup: 2D Embedding of a Toy Lattice}
\label{warmup}

{\centering
{\centering
\begin{figure*}[!htbp]
\captionsetup[subfigure]{position=b}
\centering
\subcaptionbox{Original lattice}{\includegraphics[width=24em]{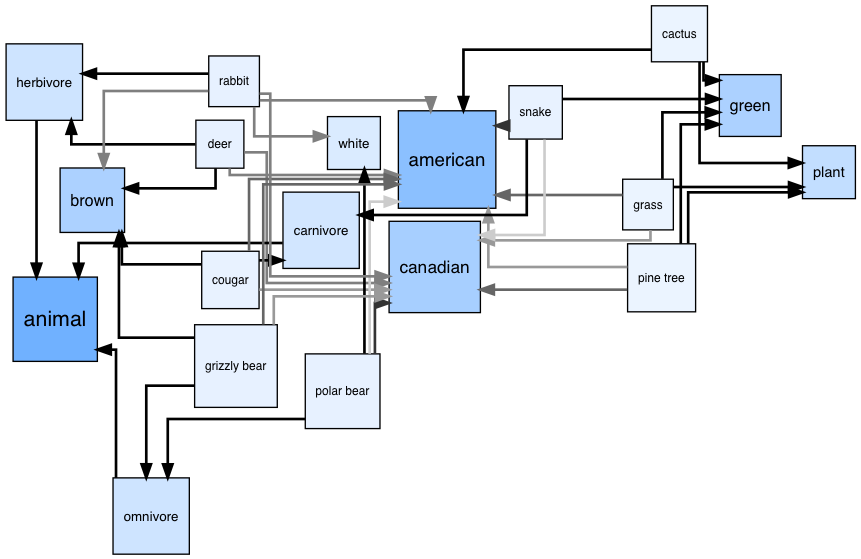}}
\subcaptionbox{Ground truth CPD}{\includegraphics[width=17em]{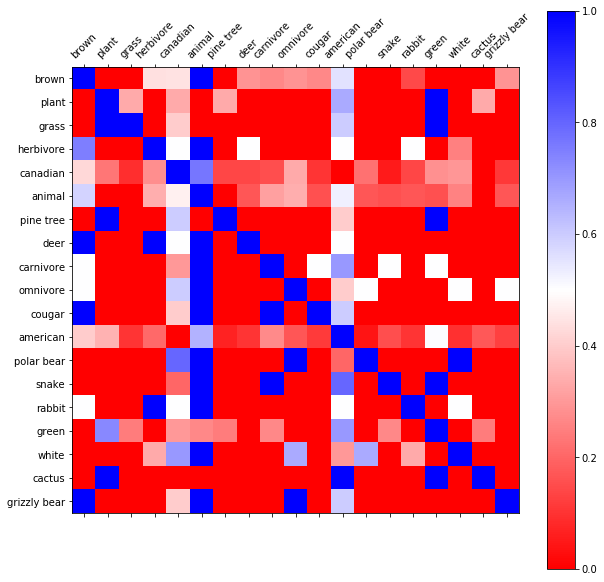}}
\caption{Representation of the toy probabilistic lattice used in Section \ref{warmup}. Darker color corresponds to more unary marginal probability. The associated CPD is obtained by a weighted aggregation of leaf elements.}
\label{orig-lattice-cpd-scb}
\end{figure*}
}
\begin{table*}[!htbp]
\addtocounter{table}{-1}
\refstepcounter{figure}
\label{toy-cpd}
\captionsetup{labelformat=empty}
\begin{tabular}{cc}
\begin{subfigure}{20em}\centering\includegraphics[width=16em]{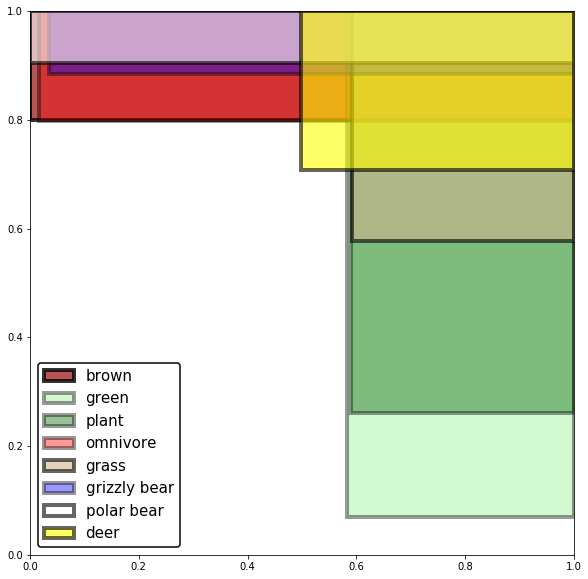}\caption{POE lattice}\label{poe-lattice}\end{subfigure}&
\begin{subfigure}{20em}\centering\includegraphics[width=16em]{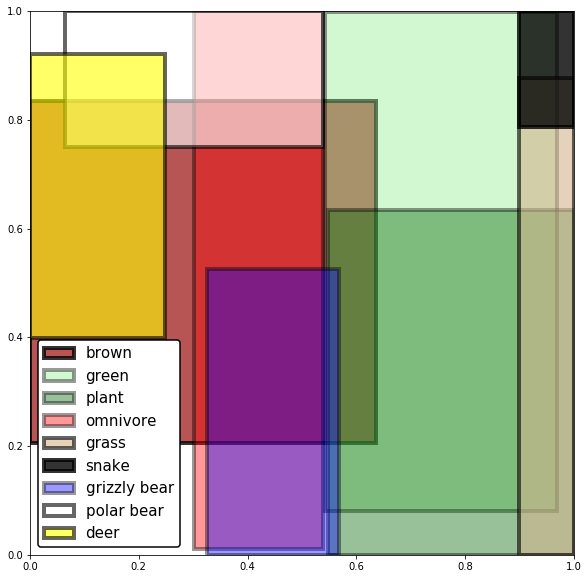}\caption{Box lattice}\label{cube-lattice}\end{subfigure}\\
\begin{subfigure}{20em}\centering\includegraphics[width=18em]{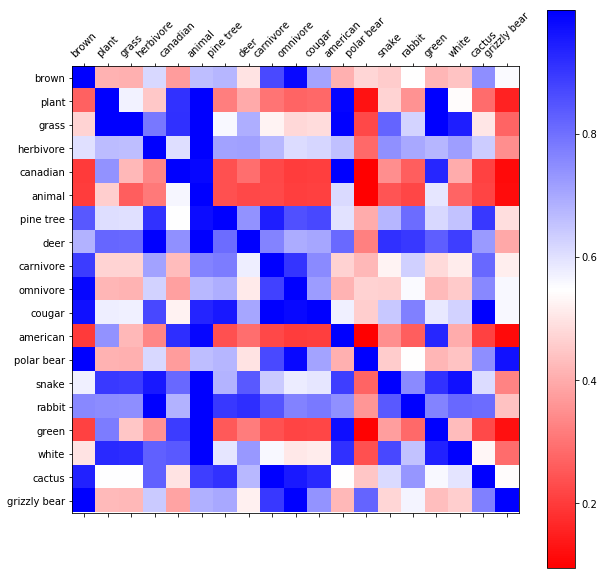}\caption{POE CPD}\label{poe-cpd}\end{subfigure}&
\begin{subfigure}{20em}\centering\includegraphics[width=18em]{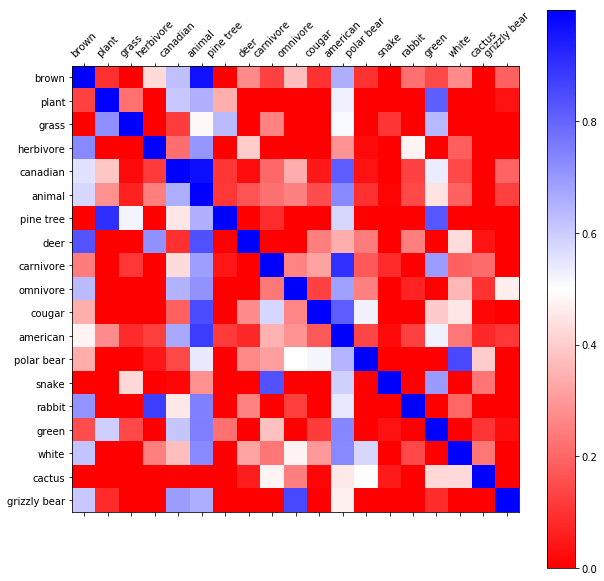}\caption{Box CPD}\label{cube-cpd}\end{subfigure}\\
\end{tabular}
\caption{Figure \thefigure: Lattice representations and conditional probabilities from POE vs. box lattice. Note how the box lattice model's lack of ``anchoring'' to a corner allows it vastly more expressivity in matching the ground truth CPD seen in Figure \ref{orig-lattice-cpd-scb}.}
\end{table*}
}
\begin{table*}[!ht]
\centering
\begin{tabular}{ll|ll|ll}
P(grizzly bear $|$ ... ) &                        & P(cactus $|$ ... )     &                        & P(plant $|$ ... ) &      \\ \hline
P(grizzly bear)          & 0.12                   & P(cactus)              & 0.10                   & P(plant)          & 0.20 \\
omnivore                 & \textcolor{blue}{0.29} & green                  & \textcolor{blue}{0.16} & green             & \textcolor{blue}{0.37} \\
white                    & \textcolor{red}{0.00}  & plant                  & \textcolor{blue}{0.39} & snake             & \textcolor{red}{0.00} \\
brown                    & \textcolor{blue}{0.30} & american, green        & \textcolor{blue}{0.19} & carnivore         & \textcolor{red}{0.00} \\
omnivore, white          & \textcolor{red}{0.00}  & plant, green, american & \textcolor{blue}{0.40} & cactus            & \textcolor{blue}{0.78} \\
omnivore, brown          & \textcolor{blue}{0.38} & american, carnivore    & \textcolor{red}{0.00}  & american, cactus  & \textcolor{blue}{0.85}
\end{tabular}
\caption{Multi-way queries: conditional probabilities adjust when adding additional evidence or contradiction. In constrast, POE can only raise or preserve probability when conditioning.}
\label{multi-query}
\end{table*}

We begin by investigating properties of our model in modeling a small toy problem, consisting of a small hand constructed ontology over 19 concepts, aggregated from atomic synthetic examples first into a probabilistic lattice (e.g. some rabbits are brown, some are white), and then a full CPD. We model it using only 2 dimensions to enable visualization of the way the model self-organizes its ``event space", training the model by minimize weighted cross-entropy with both the unary marginals and pairwise conditional probabilities. We also conduct a parallel experiment with POE as embedded in the unit cube, where each representation is constrained to touch the faces $x=1,y=1$. In Figure \ref{toy-cpd}, we show the representation of lattice structures  by POE and the box lattice model as compared to the abstract probabilistic lattice used to construct the data, shown in Figure \ref{orig-lattice-cpd-scb}, and compare the conditional probabilities produced by our model to the ground truth, demonstrating the richer capacity of the box model in capturing strong positive and negative correlations. In Table \ref{multi-query}, we perform a series of multivariable conditional queries and demonstrate intuitive results on high-order queries containing up to 4 variables, despite the model being trained on only 2-way information. 

\subsection{WordNet}

\begin{table}[!ht]
\centering
\begin{tabular}{l|l}
term1                   & term2                   \\ \hline
craftsman.n.02          & shark.n.03              \\
homogenized\_milk.n.01  & apple\_juice.n.01       \\
tongue\_depresser.n.01  & paintbrush.n.01 \\
deerstalker.n.01        & bathing\_cap.n.01       \\
skywriting.n.01         & transcript.n.01         \\
\end{tabular}
\caption{Negatively correlated variables produced by the model.}
\label{wordnet_neg}
\end{table}

\begin{table}[!ht]
\centering
\begin{tabular}{lc}
Method                        & Test Accuracy \% \\ \hline
transitive            & 88.2             \\
word2gauss                    & 86.6             \\
OE                            & 90.6             \\
\citet{li2017improved}        & 91.3             \\
DOE (KL)    & \textbf{92.3}    \\ \hline
POE                           & 91.6             \\
POE (100 dim)                 & 91.7             \\
Box                           & 92.2             \\
Box + CPD                     & \textbf{92.3}    \\
\end{tabular}
\caption{ Classification accuracy on WordNet test set.}
\label{wordnet-results}
\end{table}

We experiment on WordNet hypernym prediction, using the same train, development and test split as \citet{orderembedding}, created by randomly taking 4,000 hypernym pairs from the 837,888-edge transitive closure of the WordNet hypernym hierarchy as positive training examples for the development set, 4,000 for the test set, and using the rest as training data. Negative training examples are created by randomly corrupting a train/development/test edge $(u,v)$ by replacing either $u$ or $v$ with a randomly chosen negative node. We use their specific train/dev/test split, while \citet{athiwaratkun2018on} use a different train/dev split with the same test set (personal communication) to examine the effect of different negative sampling techniques. We cite their best performing model, called DOE (KL).

Since our model is probabilistic, we would like a sensible value for $P(n)$, where $n$ is a node. We assign these marginal probabilities by looking at the number of descendants in the hierarchy under a node, and normalizing over all nodes, taking $P(n) = \frac{|~\textit{descendants(n)}~|}{|~\textit{nodes}~|}$. 

Furthermore, we use the graph structure (only of the subset of edges in the training set to avoid leaking data) to augment the data with approximate conditional probabilities $P(x|y)$. For each leaf, we consider all of its ancestors as pairwise co-occurences, then aggregate and divide by the number of leaves to get an approximate joint probability distribution, $P(x, y) = \frac{|~\textit{x, y co-occur in ancestor set}~|}{|~\textit{leaves}~|}$. With this and the unary marginals, we can create a conditional probability table, which we prune based on the difference of $P(x|y)$ and $P(y|x)$ and add cross entropy with these conditional ``soft edges'' to the training data. We refer to experiments using this additional data as Box + CPD in Table \ref{wordnet-results}.

We use 50 dimensions in our experiments. Since our model has 2 parameters per dimension, we also perform an apples-to-apples comparison with a 100D POE model. As seen in Table \ref{wordnet-results}, we outperform POE significantly even with this added representational power. We also observe sensible negatively correlated examples, shown in \ref{wordnet_neg}, in the trained box model, while POE cannot represent such relationships. We tune our models on the development set, with parameters documented in Appendix \ref{wordnet-params}. We observe that not only does our model outperform POE, it beats all previous results on WordNet, aside from the concurrent work of \citet{athiwaratkun2018on} (using different train/dev negative examples), the baseline POE model does as well. This indicates that probabilistic embeddings for transitive relations are a promising avenue for future work. Additionally, the ability of the model to learn from the expected "soft edges" improves it to state-of-the-art level. We expect that co-occurrence counts gathered from real textual corpora, rather than merely aggregating up the WordNet lattice, would further strengthen this effect.

\subsection{Flickr Entailment Graph}

\begin{figure}[h]
\begin{center}
\includegraphics[width=8cm, height=5cm ]{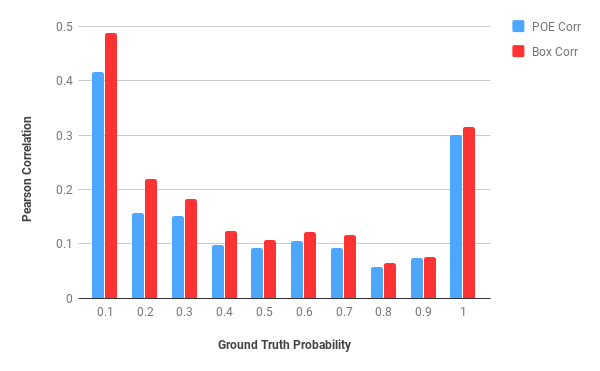}
\end{center}
\caption{R between model and gold probabilities.}
\label{corr_pic}
\end{figure}

\begin{table}[!ht]
\centering
\begin{tabular}{l|ll}
\hline
                        & $P(x|y)$         &                     \\ \cline{1-1}
\textit{Full test data} & KL  & Pearson R \\ \hline
POE                     & 0.031          & 0.949               \\
POE*                    & 0.031          & 0.949               \\
Box                     & \textbf{0.020} & \textbf{0.967}      \\ \hline
\textit{Unseen pairs}   &                &                     \\ \hline
POE                     & 0.048          & 0.920               \\
POE*                    & 0.046          & 0.925               \\
Box                     & \textbf{0.025} & \textbf{0.957}      \\ \hline
\textit{Unseen words}   &                &                     \\ \hline
POE                     & 0.127          & 0.696               \\
POE*                    & 0.084          & 0.854               \\
Box                     & \textbf{0.050} & \textbf{0.900}      \\ \hline
\end{tabular}
\caption{KL and Pearson correlation between model and gold probability.}
\label{flickr_table}
\end{table}

We conduct experiments on the large-scale Flickr entailment dataset of 45 million image caption pairs. We use the exactly same train/dev/test from \citet{lai2017learning}. We use a slightly different unseen word pairs and unseen words test data, obtained from the author. We include their published results and also use their published code, marked $*$, for comparison.

For these experiments, we relax our boxes from the unit hypercube to the nonnegative orthant and obtain probabilities under the exponential measure, $p(x) = \exp(-x)$. We enforce the nonnegativity constraints by clipping the LSTM-generated embedding \cite{hochreiter1997long} for the box minimum with a ReLU, and parametrize our $\Delta$ embeddings using a softplus activation to prevent dead units. As in \citet{lai2017learning}, we use 512 hidden units in our LSTM to compose sentence vectors. We then apply two single-layer feed-forward networks with 512 units applied to the final LSTM state to produce the embeddings.

As we can see from Table \ref{flickr_table}, we note large improvements in KL and Pearson correlation to the ground truth entailment probabilities. In further analysis, Figure \ref{corr_pic} demonstrates that while the box model outperforms POE in nearly every regime, the highest gains come from the comparatively difficult to calibrate small entailment probabilities, indicating the greater capability of our model to produce fine-grained distinctions.

\section{Conclusion and Future Work}

We have only scratched the surface of possible applications. An exciting direction is the incorporation of multi-relational data for general knowledge representation and inference. Secondly, more complex representations, such as $2n$-dimensional products of 2-dimensional convex polyhedra, would offer greater flexibility in tiling event space. Improved inference of the latent boxes, either through better optimization or through Bayesian approaches is another natural extension. Our greatest interest is in the application of this powerful new tool to the many areas where other structured embeddings have shown promise.

\section{Acknowledgments}
We thank Alice Lai for making the code from her original paper public, and for providing the additional unseen pairs and unseen words data. We also thank Haw-Shiuan Chang, Laurent Dinh, and Ben Poole for helpful discussions. We also thank the anonymous reviewers for their constructive feedback.

This work was supported in part by the Center for Intelligent Information Retrieval and the Center for Data Science, in part by the Chan Zuckerberg Initiative under the project Scientific Knowledge Base Construction., and in part by the National Science Foundation under Grant No. IIS-1514053. Any opinions, findings and conclusions or recommendations expressed in this material are those of the authors and do not necessarily reflect those of the sponsor.

\bibliography{paper}
\bibliographystyle{acl_natbib}

\newpage
\onecolumn
\appendix
\begin{center}
\Large{Supplementary Material}
\end{center}

\section{Queries with Negated Variables}
\label{negated-variables}

Section \ref{sec:box-lattice} mentions that although the complement of a box is not a box, queries involving negated variables can be calculated exactly with Inclusion-Exclusion, demonstrated in Table \ref{neg-query}. While there are many more interesting and efficient approaches, we simply use the formula for calculating the volume of the union of hyperrectangles (a standard Inclusion-Exclusion formula). 

This is equivalent since the intersection of complements of boxes is the complement of the union of boxes.  We first intersect all of the non-negated variables into one conjunction box, $T$. We then calculate the volume of the union of $T$ with all of the boxes representing complements of negated variables $F={\neg f_1, \neg f_2, \neg f_3,...}$, $v_1 = (T \cup f_1 \cup  f_2 \cup f_3 ...)=1- P(\neg T, \neg f_1,\neg f_2, \neg f_3, ...)$, and the volume of just the negated variables' boxes, $v_2 = (f_1 \cup  f_2 \cup f_3 ...) = 1-P(\neg f_1,\neg f_2, \neg f_3, ...)$. The probability of the query is $v_1 - v_2 = P(F) - P(\neg T, F)=(P(T,F) + P(\neg T, F)) - P(\neg T, F) = P(T,F)$, which was the original query.

\begin{table}[!ht]
\centering
\begin{tabular}{ll}
P(deer $|$ ... ) &                          \\ \hline
P(deer)          & 0.12                     \\
$\neg$white                 & \textcolor{blue}{0.13}   \\
animal                    & \textcolor{blue}{0.50}    \\
$\neg$white,animal                   & \textcolor{blue}{0.54}   \\
$\neg$white,animal,herbivore          & \textcolor{blue}{0.73}    \\
$\neg$white, animal, herbivore, $\neg$rabbit & \textcolor{blue}{0.80} \\
$\neg$white, animal, $\neg$herbivore,$\neg$rabbit          & \textcolor{red}{0.00}  
\end{tabular}
\caption{Negated variables: queries on the toy data with negated variables, calculated with Inclusion-Exclusion.}
\label{neg-query}
\end{table}

\section{Properties of the Box Lattice}
\label{lattice-properties}

In this section, we cover some technical details about the box lattice model and its properties especially as compared to the order embedding model.

\subsection{Non-Distributivity}

A lattice is called \emph{distributive} if the following identity holds for all members $x,y,z$:
\begin{align*}
x \wedge (y \vee z) = (x \wedge y) \vee (x \wedge z)
\end{align*}

\claim{Order embeddings form a distributive lattice.}
\proof{This is a standard results on vector lattices shown in e.g. \cite{rieszspace}}

A non-distributive lattice is a strictly more general object, capable of modeling more objects since it does not necessarily need to fulfill the above identity for all triples $x,y,z$.

\claim{The box lattice is non-distributive.}
\proof{Consider the box lattice in 1-dimension. Let $x=[0,0.3]$, $y=[0.2,0.6]$, and $z=[0.5,1.0]$. Then $x \wedge (y \vee z)=[0.2,0.3]$, but $(x \wedge y) \vee (x \wedge z) = [0,0.6] \vee \bot = [0,0.6]$.}

This proves that the box lattice is a strict generalization of order embeddings, and not equivalent to order embeddings of any dimensionality. Additionally, our choice of an example containing disjoint elements hints at the importance of non-distributivity for our goal of modeling disjoint events.

\subsection{Pseudocomplemented}

A lattice is called \emph{pseudocomplemented} if for every element $x$ there exists a unique greatest element in the lattice $x^*$ that is disjoint from $x$ and $x \wedge x^* = \bot$. The box lattice is almost always pseudocomplemented, aside from symmetry concerns (for example, a perfectly centered cube in the 2-dimensional box lattice of side length $<1$ has 4 possible equally large pseudocomplements. However any such symmetries can always be infinitesimally perturbed without breaking order structure so the box lattice is pseudocomplemented in a measure-theoretic sense. However, these pseudocomplements can be arbitrarily bad approximations of the true complement set of a box, with the worst case scenario coming from large, nearly-centered cubes.

\section{Asymmetrizing Score Matrices}
\label{asym-section}
\subsection{Probabilistic Models}
\label{prob-asym-proof}
Assume we have a pairwise CPD between Bernoulli variables, and also have access to the unary marginals for each Bernoulli, and further that no unary marginals are exactly identical. If they are exactly identical, we can generate random independent Bernoulli parameters and their JPD, and take a small convex combination with that to infinitesimally perturb the statistics, so this proof is valid everywhere but on a set of measure 0 which we can approximate arbitrarily well.

\claim{If all unary marginals are distinct, taking the elements of the pairwise CPD, removing the diagonal, and deleting an entry if $P(A|B) < P(B|A)$, that is if $A_{ij}<A_{ji}$, will result in a weighted adjacency matrix for an acyclic directed graph}
\proof{Order the variables $x_1...x_n$ so that $p(x_i)<p(x_j)$ if $i<j$. Now an entry of the CPD $p(x_i|x_j)=p(x_i,x_j)/p(x_j)=C_{ij}$ is less than $C_{ji}=p(x_i,x_j)/p(x_i)$ if $p(x_i)<p(x_j)$. So with the variables so ordered, if we use the CPD to create an adjacency matrix with an edge $C_{ij}=1$ if and only if $p(x_i)<p(x_j)$, it will be upper triangular with 0 on the diagonal. This is a nilpotent matrix which means it is the adjacency matrix of an acyclic graph. This can be easily seen since the entries of $A^k$ are the set of $K$-hop neighbors, and if this set eventually becomes empty, as in a nilpotent matrix, we have no cycles.}

Since the labeling of our vertices is arbitrary, this means that our adjacency matrix created by the proposed asymmetrizing procedure is always acyclic since it is similar to an upper triangular matrix with 0s on the diagonal. 

This holds as long as the unary marginals can always be ordered (which they can be except on a set of measure zero, and in practice on it seems to work even if you ignore this constraint.

\subsection{KL Divergences and Gaussian Embeddings}
\label{kl-gaussian-proof}

Assume the same setup as section \ref{prob-asym-proof}, but the scores in the matrix come from (possibly thresholded if $A_{ij}-A_{ji} < c$) pairwise divergences between Gaussian embeddings.

\claim{There exist graphs produced by the above procedure that do not lead to directed acyclic graphs if thresholded by deleting entries when $A_{ij} < A_{ji}$:}
\proof{Consider the following set of 5 2-dimensional Gaussians with diagonal covariance:
\begin{align*}
G_1&=\mathcal{N}(x_1;[-5, -3], \diag{([3, 7])})\\
G_2&=\mathcal{N}(x_2;[-3, 5], \diag{[(7, 4])})\\
G_3&=\mathcal{N}(x_3;[-5, -6], \diag{([8, 1])})\\
G_4&=\mathcal{N}(x_4;[-7, 6], \diag{([5, 5])})\\
G_5&=\mathcal{N}(x_5;[9, 3], \diag{([5, 9])})
\end{align*}
Applying asymmetrization and even pruning at a threshold of $c=1$ (which is non-nilpotent and does affect edges) produces a cycle between nodes 5, 1, and 3. There are certain repeated numbers in the parameters, but this is not the cause of the issue. They are whole numbers for ease of exposition, they were randomly generated and many more examples can be created with arbitrary floating point numbers.
}

\subsection{Order Embeddings}
\label{oe-conjecture}

We simulated many millions of random sets of order embedding parameters, and created pairwise graphs using the order embedding energy function, and were never able to find a cycle in the resulting asymmetrized graphs. We conjecture that this is because the order embedding energy is essentially a Lagrangian relaxation term penalizing the violation of a true partial order relation, but have not proven it.
\begin{conjecture}Sets of Order Embeddings can be consistently asymmetrized into directed acyclic graphs according to the procedure in section \ref{prob-asym-proof}.\end{conjecture}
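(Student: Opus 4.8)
The plan is to reproduce the device of the probabilistic proof in Section~\ref{prob-asym-proof}: rather than attack cycles directly, I would exhibit a real-valued potential $\phi$ on the embeddings such that every edge retained by the asymmetrization decreases $\phi$, which furnishes a topological order and hence acyclicity. Here the unary marginal $p(x_i)$ that served as the potential in the probabilistic case is replaced by a quantity read off the order-embedding geometry. Fix the convention that each concept $a$ has embedding $v_a \in \mathds{R}^n_+$ and that the energy $E(a,b) = \|\max(0, v_b - v_a)\|$ measures the violation of $a \preceq b$; the procedure then orients $a \to b$ exactly when $E(a,b) < E(b,a)$, i.e. when asserting $a \preceq b$ is less violated than its converse.

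The natural candidate potential is the (negative log) cone measure that POE already assigns, $\phi(v) = \|v\|_1$, so that more general concepts sitting nearer the top $0$ have smaller $\phi$. For the $\ell_1$ energy the key identity makes the correspondence exact: writing $d = v_a - v_b$ and splitting into positive and negative parts $d^+,d^-$,
\begin{align*}
\|v_a\|_1 - \|v_b\|_1 = \sum_i d_i = \|d^+\|_1 - \|d^-\|_1 = E(b,a) - E(a,b),
\end{align*}
so $E(a,b) < E(b,a) \iff \|v_a\|_1 > \|v_b\|_1$. Every retained edge therefore strictly decreases $\phi$, the graph admits the topological order obtained by sorting on $\|v\|_1$, and acyclicity is immediate. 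In fact this shows that $\ell_1$ order-embedding asymmetrization reduces to exactly the probabilistic argument, with $\|v\|_1$ playing the role of the negative log-marginal.

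The hard part is the $\ell_2$ (squared-norm) energy actually used to train order embeddings, where this clean equivalence fails: one can take $d^+$ concentrated in a single large coordinate and $d^-$ spread over many small ones so that $\|d^-\|_2 < \|d^+\|_2$ while $\|d^-\|_1 > \|d^+\|_1$, making $\phi = \|v\|_1$ non-monotone along a retained edge. So no obvious global potential exists, and I would instead reduce to ruling out directed triangles: the retained-edge relation is the tournament defined by the sign of the antisymmetric quantity $S(a,b) = \sum_i \sign(v_{a,i}-v_{b,i})\,(v_{a,i}-v_{b,i})^2$, and a tournament is acyclic iff it contains no $3$-cycle. The crux is that $S$ is not a difference of a potential, so one cannot simply telescope $S(a,b)+S(b,c)+S(c,a)$ to a contradiction; the argument must exploit the coordinatewise structure of $S$ directly — for instance a case analysis over the sign pattern of $(v_a,v_b,v_c)$ in each dimension, or a weighting argument establishing that $S(a,b),S(b,c),S(c,a)>0$ are jointly infeasible. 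This coordinatewise incompatibility for arbitrary (untrained) parameters is precisely what the simulations support but what resists a short proof; a promising fallback is a perturbative statement, proving monotonicity of $\|v\|_1$ in the regime where the embeddings nearly realize a consistent partial order (small residual energies), which is the regime OE is optimized toward.
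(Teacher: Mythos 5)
There is nothing in the paper to compare your attempt against: the statement is explicitly left as a conjecture, supported only by the authors' random simulations (``\ldots but have not proven it''), so your proposal must be judged on its own merits. Your $\ell_1$ argument is correct and goes beyond what the paper establishes: with $d = v_a - v_b$, the identity $\|v_a\|_1 - \|v_b\|_1 = E(b,a) - E(a,b)$ does make $\phi(v)=\|v\|_1$ a strict potential along every retained edge, and sorting by $\phi$ gives a topological order. You should add the same genericity caveat the paper needs in its probabilistic argument: when $\|v_a\|_1 = \|v_b\|_1$ the rule ``delete the entry if strictly smaller'' retains both directions and creates a 2-cycle, so you must assume distinct $\ell_1$ norms, a measure-zero exclusion removable by infinitesimal perturbation exactly as in Section \ref{prob-asym-proof}. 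Your observation that $\phi$ is precisely POE's negative log marginal, so the $\ell_1$ case literally reduces to the paper's probabilistic proof, is also correct and is the cleanest way to see it.

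However, the part you leave open --- ruling out directed 3-cycles for the squared Euclidean energy that order embeddings actually use --- cannot be completed, because the conjecture is false in that case. Take
\begin{align*}
v_a = (0,1,2), \qquad v_b = (2,0,1), \qquad v_c = (1,2,0) \in \mathds{R}^3_+ .
\end{align*}
Then $E(a,b) = \|\max(0, v_b - v_a)\|^2 = \|(2,0,0)\|^2 = 4$ while $E(b,a) = \|(0,1,1)\|^2 = 2$, so the edge $b \to a$ is retained; by the cyclic symmetry of the three embeddings, $c \to b$ and $a \to c$ are retained as well, giving a directed 3-cycle. In your own notation, $S(b,a) = S(c,b) = S(a,c) = 4 - 1 - 1 = 2 > 0$. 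This is exactly the ``concentrated versus spread'' failure mode you identified ($d^+$ in one coordinate, $d^-$ split over two), arranged in a rock-paper-scissors pattern across three points; your tournament reduction is the right frame, but it leads to a counterexample rather than a proof. The counterexample is robust: every comparison is strict, so an open set of parameter configurations around it also produces cycles (no measure-zero or perturbation escape, unlike ties), and scaling all coordinates by $t>0$ scales every energy gap by $t^2$, so it survives any fixed threshold. Consequently the conjecture can only hold under extra hypotheses; your perturbative fallback (embeddings whose residual violations of a true partial order are small) is the right thing to aim for. This also means the paper's simulation evidence is an artifact --- indeed, if the simulations scored pairs with the $\ell_1$ violation rather than the squared norm, your potential argument explains exactly why no cycle was ever found.
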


\section{Model Parameters}
\subsection{WordNet Parameters}
\label{wordnet-params}

Since the WordNet data has binary ${0,1}$ links instead of calibrated probabilities, and the negative links are found from random negative sampling, we constrain the delta embedding to not update for negative samples during optimization. We found this was effective in preventing random negative samples from decreasing the volume of the boxes and creating artificially disjoint pairs. 

The WordNet parameters that achieved best performance on the development set (whose train set performance we reported) are: 
\begin{verbatim}
batch size: 800 
dimension: 50
edge loss weight: 1.0 
unary loss weight: 9.0
learning rate: 0.001
minimum dimension delta size: 1e-6
dimension-max regularization weight: 0.005
optimizer: Adam
\end{verbatim}

For WordNet training with additional soft CPD edges, we use the same parameters. We also perform pruning on the generated CPD file. We only include $\langle t_1, t_2 \rangle$ pairs with probability $\geq$ 0.6  and the reverse pair $\langle t_2, t_1 \rangle$ $\leq$ 0.4 probability.

We tune the batch size of the model between 800 and 40000 because bigger batch size facilitates faster training. We also sweep over 1.0 to 9.0 for edge loss weight and 9.0 to 1.0 for the unary loss weight. The learning rate we tune in $\lambda$ $\in$ \{0.001, 0.0001\}. The minimum dimension delta size we tune in $\in$ \{0.01, 0.001, 0.0001, 0.00001, 0.000001\}. The dimension-max regularization encourages the upper bound of box to be close 1.0 with an L1 penalty to prevent collapse. We perform parameter search in \{0.0, 0.001, 0.005, 0.01, 0.05, 0.1, 0.5\}.

\subsection{Flickr Parameters}
\label{flickr-params}

The Flickr parameters that achieved best performance on the development set (whose train set performance we reported) are: 
\begin{verbatim}
batch size: 512
dropout: 0.5
unary loss weight: 8.0
edge loss weight: 2.0 
learning rate: 0.0001
minimum dimension delta size: 1e-6
optimizer: Adam
\end{verbatim}
The LSTM parameters are initialized with Glorot initialization \cite{glorot2010understanding}, as are the weight and bias parameters for the feedforward networks to produce the box minimums. The network to produce the $\Delta$ embedding is initialized from a uniform distribution from $[15.0, 15.50]$. We clip to zero for min embeddings (apply a ReLU), and apply a softplus to enforce the positivity and minimum dimension size constraints on the $\Delta$ embeddings.

We also sweep over 1.0 to 9.0 for edge loss weight and 9.0 to 1.0 for the unary loss weight. The learning rate $\lambda$ $\in$ \{0.001, 0.0001\}. We tried Glorot initialization with the $\Delta$ network as well, but since we wanted a high degree of overlap at the beginning of training, we simply swept over different uniform initialization ranges in $[5.0, 5.5]$, $[10.0, 10.5]$ and $[15.0, 15.5]$.

\end{document}